\documentclass{article}
\usepackage[utf8]{inputenc}
\usepackage{graphicx}
\usepackage[margin=1in]{geometry}
\usepackage{amsmath, amssymb}
\usepackage{url}
\usepackage[colorlinks=true,
            linkcolor=blue,
            citecolor=red,
            urlcolor=magenta]{hyperref}
\usepackage[ruled,vlined,linesnumbered,algo2e]{algorithm2e}
\usepackage{amsthm}
\usepackage{cite}

\newtheorem{theorem}{Theorem}

\newtheorem{proposition}[theorem]{Proposition}

\newtheorem{definition}[theorem]{Definition}
\newtheorem{remark}[theorem]{Remark}

\usepackage{rotating} 
\usepackage{array}     
\usepackage{longtable} 
\usepackage{booktabs}
\usepackage{siunitx}

\newcommand{\atoms}{\mathcal{A}}

\newcommand{\OmegaSet}{\Omega}
\newcommand{\Sem}{\mu}

\newcommand{\Bel}{b_{\boldsymbol{\theta}}}
\newcommand{\LogicFn}{l}
\newcommand{\Measure}{m}

\newcommand{\B}{\mathbb{B}}
\newcommand{\E}{\mathbb{E}}
\newcommand{\Ind}[1]{\mathbb{I}\!\left[#1\right]}

\title{From Hybrid Mechanistic--Data-Driven Modeling Toward Neuro-Symbolic AI: What, Why, and How}

\author{
    Moein E. Samadi$^{1,2}$, Andreas Schuppert$^{*,1,2}$ \\
    \\
    \small
    $^1$Institute for Computational Biomedicine, RWTH Aachen University, Aachen, Germany. \\
    \small
    $^2$Center for Computational Life Sciences, RWTH Aachen University, Aachen, Germany. \\
    \small
    $^*$\textit{Correspondence:} aschuppert@ukaachen.de
}

\date{}

\begin{document}
\maketitle

\begin{abstract}
Hybrid mechanistic--data-driven models, which combine first-principles with learned components, are increasingly used in process engineering and scientific machine learning. Common hybrid modeling designs are specified primarily through their architectures and training losses, which offers a limited basis for a shared semantic interface to compare or verify them across domains, with comparatively little attention paid to epistemic uncertainty in the mechanistic part.

We bridge hybrid modeling and neuro-symbolic (NeSy) AI by reconstructing these designs as instances of NeSy interface. The resulting translation, Hybrid-to-NeSy (H2N), places mechanistic knowledge on the language side, learned modules on the belief side, and validity domains together with constraints on the logic side. For each design, H2N then yields an explicit NeSy inference functional and a logic--belief decomposition.

From this decomposition we derive two metrics: structural violation rate (SVR), measuring whether the learned belief respects the mechanistic structure; and belief dispersion (BD), measuring how concentrated the learned plausibility is, serving as a hybrid model's epistemic uncertainty in its mechanistic part.
We instantiate H2N on a case study of a structured hybrid model for binary classification under label noise and show that models with higher SVR and BD exhibit greater variability in held-out accuracy. Under structural distribution shift, H2N further quantifies a model's uncertainty during extrapolations, whereas test accuracy reveals the same shift only post hoc.

\end{abstract}

\section{Introduction}
\label{sec:intro}

In many science and engineering domains, the system of interest is partially understood: mass and energy balances, reaction stoichiometry, structural topology, or known kinetic forms specify what must be true, while the remaining components (constitutive closure, an unobserved rate, an environment-dependent residua) resist first-principles treatment and must be learned from data. \emph{Hybrid mechanistic--data-driven models}, also called semi-parametric \cite{teixeira2007hybrid} or grey-box \cite{tulleken1993grey} models, embed mechanistic knowledge while learning the rest \cite{psichogios1992hybrid,thompson1994modeling,van1996strategy,agarwal1997combining,schuppert2000extrapolability}. A canonical example: in a bioreactor model, mass balances and stoichiometry are fixed equations, while the cell-specific growth rate $\mu(c,T)$ is learned from data and inserted into the otherwise-mechanistic dynamics. The hybrid model thus inherits the interpretability \cite{cameron2001process}, data efficiency \cite{fiedler2008local}, and extrapolation properties \cite{van1998understanding} of mechanistic models, while using data-driven approximation where mechanism is incomplete.

Across application areas, hybrid modeling has converged on a handful of reusable design patterns \cite{von2014hybrid}: (i) \emph{serial} and \emph{parallel} arrangements of mechanistic and learned sub-models \cite{lee2002hybrid}, (ii) residual correction by output superposition \cite{bhutani2006first}, (iii) mixture-of-experts with gating \cite{peres2001knowledge,peres2008bioprocess} and fuzzy-rule analogues \cite{takagi1985fuzzy,van2002structured}, and (iv) validity-domain rules that modulate the learned component when inputs leave the training support \cite{kahrs2007validity,schuppert2011efficient,samadi2024hybrid}. 

Similar patterns appear throughout scientific machine learning \cite{glassey2018hybrid}, but they are described at the level of architectures and training losses. 
Two consequences follow. First, there is no widely adopted shared semantic account of what a hybrid model \textit{means} as an inference object, which can make cross-domain comparison and verification difficult. Second, hybrid models often lack a principled way to represent and propagate \textit{epistemic} uncertainty in mechanistic assumptions

Neuro-symbolic (NeSy) AI has long studied how to combine logical structure with learned components \cite{bader2005dimensions,hitzler2022neural,marra2024statistical,garcez2019neural,garcez2023neurosymbolic,van2021modular,belle2015probabilistic,marconato2023not}. De Smet and De Raedt \cite{de2025defining} recently consolidated this literature into a general definition: a NeSy model is a tuple $(L,\Sem,\OmegaSet,\Bel)$ where $L$ is a language with semantics $\Sem$ over an interpretation space $\OmegaSet$ and $\Bel$ is a belief weight, with inference defined by an integral functional
\begin{equation}
F_{\boldsymbol{\theta},x}(\varphi)
=
\int_{\Omega'} \LogicFn(\varphi,\omega)\, b_{\boldsymbol{\theta},x}(\omega)\,\mathrm{d}\Measure(\omega)
\label{eq:nesy-functional}
\end{equation}
that combines a logic function $\LogicFn$ (evaluating queries $\varphi$ in interpretations $\omega$) with the belief $b_{\boldsymbol{\theta},x}$. Crucially, this interface separates the \emph{logic side} (what is \textit{admissible}, given by $L$, $\Sem$, $\LogicFn$, and the integration domain $\Omega'\subseteq\OmegaSet$) from the \emph{belief side} (what is \textit{plausible}, given by $b_{\boldsymbol{\theta},x}$). %

Hybrid modeling and NeSy AI address closely related integration problems but from opposite directions: hybrid modeling has rich architectural patterns but limited basis for a shared semantic interface; NeSy has the semantics but emphasises logical languages rather than mechanistic equation systems and engineering patterns. Our work establishes a connection between these perspectives.

\paragraph{Thesis.}
Our central claim is that hybrid mechanistic--data-driven models can be
reconstructed, systematically, as NeSy models in the sense of De Smet and De Raedt
\cite{de2025defining}. Under this correspondence the mechanistic equations and
structural constraints supply the language $L$ and its semantics $\Sem$; the
learned components induce a belief function $\Bel$ over the unknown quantities;
and validity rules and constraints are expressed either as logic functions
$\LogicFn$ or as restrictions of the integration domain $\Omega'$. Every hybrid
architecture then induces an explicit inference functional of the
form~\eqref{eq:nesy-functional}.

\paragraph{Consequences of placement.}
Where each assumption is placed within the NeSy tuple $(L,\Sem,\OmegaSet,\Bel)$
determines which inference functionals are well-defined. In particular, for structured hybrid models of Ref. \cite{fiedler2008local}, keeping
the structural partition on the logic side (encoding it in the integration
domain $\Omega'$ rather than in the belief $\Bel$) yields an additive decomposition $\mathrm{BD}=\mathrm{BD}_{\text{seen}}+\mathrm{BD}_{\text{unseen}}$
(Section~\ref{sec:example}), in which $\mathrm{BD}_{\text{unseen}}$ is computable
from the coverage of $\Omega'$ at deployment time, before any out-of-distribution
(OOD) sample is observed. The conventional architecture-and-loss descriptions of
hybrid models express no such quantity, because they do not separate the
structural partition from the learned predictor.

\paragraph{Contributions.}
(i)~We introduce a principled translation \emph{procedure} that maps any hybrid model description into a NeSy tuple $(L,\mu,\Omega,b_{\boldsymbol{\theta}})$ with an
explicit inference functional (Sections~\ref{sec:problem}--\ref{sec:method}).
(ii)~Applying the procedure to canonical hybrid design patterns yields a compact \emph{mapping table} (Table~\ref{tab:taxonomy}) that records, for
each pattern, what changes in $(L,\mu,\Omega,b_{\boldsymbol{\theta}})$ and in the
induced functional, together with representative NeSy architectures that realize
it.
(iii)~We derive an evaluation protocol comprising two metrics that measure violations of logical structure and the learned belief's concentration (Section~\ref{sec:eval}).
(iv)~We instantiate the translation on a structured hybrid model for binary classification under label noise and show that the resulting metrics quantify the trained model’s epistemic uncertainty in the mechanistic component at deployment time, as well as uncertainty during extrapolation, failure modes that test accuracy alone does not reveal (Section~\ref{sec:example}).

\section{Problem Setting and Definitions}
\label{sec:problem}

\paragraph{Hybrid mechanistic--data-driven models.} Let $x\in\mathcal{X}$ denote inputs, $z\in\mathcal{Z}$ a latent state, and $y\in\mathcal{Y}$ an output.
Let $\alpha$ denote mechanistic parameters, and let $\psi$ collect unknown terms (residuals, gates, noise terms) required to complete the mechanistic description.
Here, a \emph{closure} is any constitutive specification of such unknown terms that renders the mechanistic constraints solvable once $x$ and $\alpha$ are fixed.
A hybrid model is specified by
\begin{align}
\textbf{(Mechanistic constraints)} 
&\quad 
\mathcal{G}(x,z;\alpha,\psi)=0,
\label{eq:hyb-mech}
\\
\textbf{(Observation model)} 
&\quad 
y=\mathcal{H}(x,z;\alpha,\psi)\in\mathcal{Y},
\label{eq:hyb-obs}
\\
\textbf{(Learned closure)} 
&\quad 
\psi = g_{\beta}\!\big(q(x,z),\xi\big),
\qquad \xi\sim p(\xi),
\label{eq:hyb-learned}
\end{align}
where $q$ is a feature map and $g_\beta$ is a learned module (deterministic or stochastic), and $\xi$ is a noise variable with a fixed base distribution $p(\xi)$.
Given $x$, the prediction $\hat y_{\alpha,\beta}(x)$ is obtained by solving \eqref{eq:hyb-mech}--\eqref{eq:hyb-learned} for $(z,y)$; if multiple solutions exist, the hybrid specification is assumed to include a fixed solver. 
If $g_\beta$ is stochastic, the specification induces a predictive distribution; a point prediction can be taken as $\hat y_{\alpha,\beta}(x):=\E[y\mid x]$.

\begin{definition}[Hybrid model]
A \emph{hybrid model} is the specification
$\mathcal{M}_{\alpha,\beta}=(\mathcal{G},\mathcal{H},g_\beta)$ together with the induced prediction map
$x\mapsto \hat y_{\alpha,\beta}(x)$.
\end{definition}

Many hybrid modeling designs can also be summarized at the architecture level as 
\begin{equation}
y = \operatorname{Comp}\!\left(M_{\alpha}(x),\, N_{\beta}\big(x,M_{\alpha}(x)\big)\right),
\label{eq:hyb-template}
\end{equation}
where $M_\alpha$ denotes the mechanistic operator, $N_\beta$ collects learned components, and 
$\operatorname{Comp}$ is the composition rule that combines the mechanistic output with the learned component.

\paragraph{NeSy models and the inference functional.} We adopt the interface of De Smet and De Raedt \cite{de2025defining}. A NeSy model specifies a language $L$, a semantics $\Sem$ over an interpretation space $\OmegaSet$, and a belief function $\Bel$ that weights interpretations; inference aggregates a logic function $\LogicFn$ against belief as in Equation~\eqref{eq:nesy-functional}. We distinguish the full space $\OmegaSet$ from the restricted space $\Omega'\subseteq\OmegaSet$ relevant to a given translation (unknown parameters, closures, latent states). Under mild measurability conditions on $(\OmegaSet,\LogicFn,B_{\boldsymbol{\theta},x})$, stated in Appendix~\ref{app:measure}, the functional~\eqref{eq:nesy-functional} is well-defined; Dirac beliefs correspond to $B_{\boldsymbol{\theta},x}=\delta_{\omega^\ast(x)}$.

\begin{definition}[Hybrid-to-NeSy decomposition]
\label{def:hyb-decomp}
A hybrid model $\mathcal{M}_{\alpha,\beta}$ induces a NeSy tuple $(L,\Sem,\OmegaSet,\Bel)$ and an induced inference functional $F$, where 
\begin{enumerate}
    \item $L$ and $\Sem$ capture the \emph{structural core} given by Equations \eqref{eq:hyb-mech}--\eqref{eq:hyb-obs} and the composition pattern in Equation \eqref{eq:hyb-template};
    \item $\OmegaSet$ is the \emph{interpretation space} of assignments to unknown quantities (e.g.\ $\alpha,\psi,z$, gates, noise);
    \item $\Bel$ is a \emph{belief function} over $\OmegaSet$ induced by learned part(s); (Dirac in deterministic settings; non-degenerate in stochastic/Bayesian settings);
    \item $F$ is the induced inference functional of the form Equation~\eqref{eq:nesy-functional}.
\end{enumerate}
\end{definition}

\begin{proposition}[Logic--belief separation]
\label{prop:separation}
Let $(L,\Sem,\OmegaSet,\Bel)$ be obtained via Definition~\ref{def:hyb-decomp}. Fix $x$, $\varphi\in L$, and measurable $\Omega'\subseteq\OmegaSet$.
Let $B_{\boldsymbol{\theta},x}$ be the induced (finite) belief measure on $\Omega'$ and assume $\LogicFn(\varphi,\cdot)$ is measurable and non-negative.
For $\tau\ge 0$ define
\[
\Omega_{\mathrm{viol}}^{\tau}(\varphi):=\{\omega\in\Omega' : \LogicFn(\varphi,\omega)\le \tau\},
\qquad
\Omega_{\mathrm{adm}}^{\tau}(\varphi):=\Omega'\setminus \Omega_{\mathrm{viol}}^{\tau}(\varphi).
\]
Then:
\begin{enumerate}
\item \emph{Logic side (admissibility).} The violation/admissibility partition $\Omega_{\mathrm{viol}}^{\tau}(\varphi)$ vs.\ $\Omega_{\mathrm{adm}}^{\tau}(\varphi)$ is determined entirely by $\LogicFn$ (hence by $(L,\Sem)$ and the chosen feasibility/scoring rule). In the hard case $\tau=0$,
$
F_{\boldsymbol{\theta},x}(\varphi)=\int_{\Omega'} \LogicFn(\varphi,\omega)\, \mathrm{d}B_{\boldsymbol{\theta},x}(\omega)
=\int_{\Omega_{\mathrm{adm}}^{0}(\varphi)} \LogicFn(\varphi,\omega)\, \mathrm{d}B_{\boldsymbol{\theta},x}(\omega).
$
\item \emph{Belief side (plausibility).} Conditional on admissibility, inference is governed by how $B_{\boldsymbol{\theta},x}$ allocates \emph{probability mass} within  $\Omega_{\mathrm{adm}}^{\tau}(\varphi)$ (concentrated vs.\ dispersed). In particular, if $B_{\boldsymbol{\theta},x}=\delta_{\omega^\ast(x)}$, then $F_{\boldsymbol{\theta},x}(\varphi)=\LogicFn(\varphi,\omega^\ast(x))$ and the induced behavior is admissible iff $\omega^\ast(x)\in\Omega_{\mathrm{adm}}^{\tau}(\varphi)$; non-degenerate beliefs distribute probability mass across multiple admissible interpretations.
\end{enumerate}
\end{proposition}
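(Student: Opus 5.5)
The argument is direct measure theory; I would treat the two parts separately.

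\emph{Part 1.} First I would show that the partition depends only on the logic side. For each $\tau\ge 0$, the set $\Omega_{\mathrm{viol}}^{\tau}(\varphi)=\LogicFn(\varphi,\cdot)^{-1}([0,\tau])\cap\Omega'$ is the preimage of a Borel set under the measurable map $\LogicFn(\varphi,\cdot)$, intersected with the measurable set $\Omega'$. It is therefore measurable, and so is its complement $\Omega_{\mathrm{adm}}^{\tau}(\varphi)$ in $\Omega'$. Neither definition refers to $B_{\boldsymbol{\theta},x}$. Under Definition~\ref{def:hyb-decomp}, $\LogicFn$ is fixed by $(L,\Sem)$ together with the chosen scoring rule, so the partition is determined entirely by the logic side.

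For the hard case $\tau=0$, I would write $\Omega'$ as the disjoint union $\Omega_{\mathrm{viol}}^{0}\cup\Omega_{\mathrm{adm}}^{0}$. Since $\LogicFn\ge 0$, the set $\Omega_{\mathrm{viol}}^{0}$ is exactly the zero set of $\LogicFn(\varphi,\cdot)$ on $\Omega'$. The integral of a non-negative measurable function is additive over disjoint measurable sets, which holds even when the integral is $+\infty$. On $\Omega_{\mathrm{viol}}^{0}$ the integrand vanishes, so that piece contributes $0$, and the identity in Part 1 follows. Finiteness of $B_{\boldsymbol{\theta},x}$ is not needed for this step. I would only check that the functional in \eqref{eq:nesy-functional} equals the integral against the measure $B_{\boldsymbol{\theta},x}$, that is, $\mathrm{d}B_{\boldsymbol{\theta},x}=b_{\boldsymbol{\theta},x}\,\mathrm{d}\Measure$. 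This is the convention in Appendix~\ref{app:measure}, and it also covers the Dirac case, which has no density.

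\emph{Part 2.} The statement here is partly interpretive, so I would make it precise as follows. Assume $B_{\boldsymbol{\theta},x}(\Omega_{\mathrm{adm}}^{\tau})>0$ and define the conditional measure $B^{\mathrm{adm}}(\cdot)=B_{\boldsymbol{\theta},x}(\cdot\cap\Omega_{\mathrm{adm}}^{\tau})/B_{\boldsymbol{\theta},x}(\Omega_{\mathrm{adm}}^{\tau})$, which is a probability measure because $B$ is finite. The admissible contribution then equals $B_{\boldsymbol{\theta},x}(\Omega_{\mathrm{adm}}^{\tau})\int \LogicFn\,\mathrm{d}B^{\mathrm{adm}}$. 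The set over which we integrate is fixed by Part 1, so once we condition on admissibility, all remaining dependence on the model is through how $B^{\mathrm{adm}}$ distributes mass. This is what the statement means by inference being governed by the belief side.

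For the Dirac claim, I would use $\int f\,\mathrm{d}\delta_{\omega^\ast}=f(\omega^\ast)$ for measurable $f$. This requires $\omega^\ast(x)\in\Omega'$, which I would assume so that the belief is supported on $\Omega'$. It gives $F_{\boldsymbol{\theta},x}(\varphi)=\LogicFn(\varphi,\omega^\ast(x))$. The equivalence "admissible iff $\omega^\ast(x)\in\Omega_{\mathrm{adm}}^{\tau}$" is then just the definition $\LogicFn(\varphi,\omega^\ast(x))>\tau$, read through this identity. For a non-degenerate belief, no single point carries all the mass, so $B^{\mathrm{adm}}$ spreads mass over more than one admissible interpretation.

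The main obstacle is Part 2. It is informal, so the work lies in choosing a rigorous reading and stating its side conditions: a positive admissible mass, the Dirac point lying in $\Omega'$, and the reading of "non-degenerate" as not a point mass. The measure-theoretic steps themselves are routine.
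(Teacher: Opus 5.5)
Your Part~1 and the Dirac half of Part~2 follow the paper's argument. The partition is read off $\LogicFn(\varphi,\cdot)$ and $\tau$ alone; by non-negativity $\Omega_{\mathrm{viol}}^{0}(\varphi)$ is the zero set of the integrand and contributes nothing; and $\int f\,\mathrm{d}\delta_{\omega^\ast}=f(\omega^\ast)$. You also add bookkeeping the paper omits: measurability of the partition, additivity for non-negative integrands without finiteness, and the requirement $\omega^\ast(x)\in\Omega'$.

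The step that fails is your last one: ``no single point carries all the mass, so $B^{\mathrm{adm}}$ spreads mass over more than one admissible interpretation.'' Not being a point mass is a property of $B_{\boldsymbol{\theta},x}$ on all of $\Omega'$, and it does not pass to the normalised restriction. Take $\tau=0$ and $\omega_1,\omega_2\in\Omega'$ with $\LogicFn(\varphi,\omega_1)=1$ and $\LogicFn(\varphi,\omega_2)=0$, and let $B_{\boldsymbol{\theta},x}=\tfrac12\delta_{\omega_1}+\tfrac12\delta_{\omega_2}$. The belief is non-degenerate, yet $B^{\mathrm{adm}}=\delta_{\omega_1}$. A non-degenerate belief can also put no mass on $\Omega_{\mathrm{adm}}^{\tau}(\varphi)$ at all, so that $B^{\mathrm{adm}}$ is undefined. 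The paper's own tolerance sweep at $\kappa/S=0$ (SVR $=1.000$ with BD $=0.783$) is of this kind.

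Under your reading of ``non-degenerate'', the clause is therefore false, and no argument will establish it. The paper does not try to. Its proof only records that for non-Dirac beliefs $F_{\boldsymbol{\theta},x}(\varphi)$ is the $B_{\boldsymbol{\theta},x}$-weighted average of $\LogicFn(\varphi,\cdot)$. In the hard case, it adds, $F_{\boldsymbol{\theta},x}(\varphi)$ depends only on how $B_{\boldsymbol{\theta},x}$ allocates mass within $\Omega_{\mathrm{adm}}^{0}(\varphi)$. The rest is treated as description. Either replace your last sentence by that statement, or make the clause hold by hypothesis (e.g.\ assume $B^{\mathrm{adm}}$ itself is not a point mass).

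Relatedly, note why the paper restricts this part to $\tau=0$. For $\tau>0$ the violating region still contributes a term in $[0,\tau B_{\boldsymbol{\theta},x}(\Omega_{\mathrm{viol}}^{\tau})]$ to $F$. Your admissible-contribution formula therefore describes $F$ itself only at $\tau=0$, where $F_{\boldsymbol{\theta},x}(\varphi)=B_{\boldsymbol{\theta},x}(\Omega_{\mathrm{adm}}^{0})\int\LogicFn\,\mathrm{d}B^{\mathrm{adm}}$. That expression depends on the admissible mass as well as on $B^{\mathrm{adm}}$, not on $B^{\mathrm{adm}}$ alone.
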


\begin{proof}[Proof sketch]
The admissibility partition uses only $\LogicFn(\varphi,\cdot)$ and $\tau$, hence only
$(L,\Sem,\LogicFn)$ and not $B_{\boldsymbol{\theta},x}$. In the hard case $\tau=0$,
$\LogicFn(\varphi,\cdot)=0$ on $\Omega_{\mathrm{viol}}^{0}(\varphi)$ by non-negativity
together with the definition of $\Omega_{\mathrm{viol}}^{0}(\varphi)$, so the violating
set contributes nothing; a Dirac belief collapses $F_{\boldsymbol{\theta},x}(\varphi)$ to
$\LogicFn(\varphi,\omega^\ast(x))$. The full argument is in Appendix~\ref{app:proof}.
\end{proof}

\begin{remark}[Coverage independent of belief]
\label{rem:coverage}
The partition $\Omega_{\mathrm{adm}}^\tau/\Omega_{\mathrm{viol}}^\tau$ depends
only on $(L,\Sem,\LogicFn)$, so structural coverage of $\Omega'$ is a function
of the logic side alone. Sec.~\ref{sec:example} uses this independence to
compute $\mathrm{BD}_{\text{unseen}}$ from row coverage, without access to the
OOD sample.
\end{remark}

\noindent\textbf{Problem.}
Given a hybrid model (Equation~\eqref{eq:hyb-template}), construct $(L,\Sem,\OmegaSet,\Bel)$ and $\LogicFn$ such that structures are evaluated via $\LogicFn$ under $(L,\Sem)$, unknowns live in $\OmegaSet$, learning induces $\Bel$, and the induced functional $F$ reproduces the hybrid's predictive behaviour (up to selection rules).

\section{Method: A translation from hybrid modeling to NeSy}
\label{sec:method}

This section gives a constructive solution to \textsc{H2N}: given a hybrid design, we build $(L,\Sem,\OmegaSet,\Bel)$ and $\LogicFn$ so that the induced functional in~\eqref{eq:nesy-functional} becomes an explicit, comparable interface. The input $x$ is external to $\omega$; the learned part induces $b_{\boldsymbol{\theta},x}(\omega)$ conditional on $x$, while query dependence enters through $\LogicFn(\varphi,\omega)$. If a design requires query-dependent beliefs, this can be represented by extending $\omega$ or letting $b_{\boldsymbol{\theta},x}$ depend on $\varphi$; our default conditions on $x$ only.

\subsection{Translation principles}
\label{subsec:principles}

\textsc{H2N} is governed by five principles, stated in full in
Appendix~\ref{app:principles} and summarised here:\\
\textbf{(P1)}~Mechanistic structure determines the language $L$, with unknown
closures and latent states introduced as explicit symbols.\\
\textbf{(P2)}~Semantics $\Sem$ specify evaluation; the logic function $\LogicFn$
is the operational object inside the integral, implementing admissibility or
scoring.\\
\textbf{(P3)}~Learned modules induce belief
$b_{\boldsymbol{\theta},x}(\omega)$ over interpretations; deterministic
learners are the Dirac case.\\
\textbf{(P4)}~Validity domains and hard constraints live on the logic side:
by restricting the integration domain $\Omega'\subseteq\OmegaSet$ and/or by
selection inside $\LogicFn$, separating \emph{admissibility} (logic) from
\emph{plausibility} (belief). 
Hard physical laws are encoded on the logic side rather than in $\Bel$: such
laws express data-independent admissibility, whereas $\Bel$ is by construction
data-dependent. Encoding them in $\Bel$ would couple their enforcement to
sample size, which is inconsistent with their meaning. Soft or empirical
constraints, whose strength legitimately depends on data, may enter $\Bel$.\\ 
\textbf{(P5)}~Architectural factorisations (serial, parallel, mixture) are
mirrored by factorisations of $\LogicFn$ and/or $b_{\boldsymbol{\theta},x}$.
As summarised in Algorithm~\ref{alg:h2n} in
Appendix~\ref{app:principles}, the result is a NeSy inference
object whose structural core $(L,\Sem,\LogicFn)$ and uncertainty model
$\Bel$ make comparison of hybrid designs systematic.

\subsection{Mapping of hybrid modeling designs to NeSy elements}
\label{subsec:taxonomy}

Table~\ref{tab:taxonomy} summarizes how canonical hybrid modeling design patterns map to NeSy elements and how each pattern changes the inference functional $F_{\boldsymbol{\theta},x}$ in Equation~\eqref{eq:nesy-functional}. Across the patterns, the translation separates structure from learned uncertainty. 
The main differences are whether a pattern changes (i) the structural language $L$ that determines what is expressible, (ii) operational admissibility or scoring through $\LogicFn$ that determines how structure is enforced, (iii) which unknowns are integrated over through the choice of $\Omega'$ within $\OmegaSet$, or (iv) how uncertainty is allocated through belief factorization and dispersion in $b_{\boldsymbol{\theta},x}$.

\begin{sidewaystable}[!p]
\centering
\footnotesize
\setlength{\tabcolsep}{3pt}
\renewcommand{\arraystretch}{1.15}
\caption{Canonical hybrid modeling design patterns mapped to NeSy elements and their primary impact on the inference functional $F_{\boldsymbol{\theta},x}$ in Equation~\eqref{eq:nesy-functional}. The final column lists representative NeSy architectures that realise each pattern.}
\label{tab:taxonomy}

\begin{tabular}{
>{\raggedright\arraybackslash}p{0.20\textheight}
>{\raggedright\arraybackslash}p{0.11\textheight}
>{\raggedright\arraybackslash}p{0.11\textheight}
>{\raggedright\arraybackslash}p{0.11\textheight}
>{\raggedright\arraybackslash}p{0.11\textheight}
>{\raggedright\arraybackslash}p{0.11\textheight}
>{\raggedright\arraybackslash}p{0.13\textheight}
}
\toprule
\textbf{Hybrid design} &
\textbf{$L$} &
\textbf{$\Sem$ / $\LogicFn$} &
\textbf{$\OmegaSet$ (and $\Omega'$)} &
\textbf{$\Bel$ (via $b_{\boldsymbol{\theta},x}$)} &
\textbf{Effect on $F_{\boldsymbol{\theta},x}$} &
\textbf{Example NeSy realisations} \\
\midrule

\textbf{Serial closure} (learned kinetics/transport inside balances) \cite{psichogios1992hybrid,thompson1994modeling} &
Balance equations with explicit closure symbols &
Residual or trajectory evaluation; feasibility selection &
Closures, parameters, latent states, noise; $\Omega'$ restricts to relevant unknowns &
Belief over closures/params (Dirac or probabilistic) &
$\LogicFn$ enforces feasibility; belief integrates closure uncertainty &
DeepProbLog-style amortised inference; neuro-symbolic concept learners \\

\addlinespace
\textbf{Parallel residual} (mechanistic + learned correction) \cite{lee2002hybrid,bhutani2006first} &
Constraint tying $y$ to $y_{\mathrm{mech}}$ and residual symbol &
Often graded scoring; constraints as penalty &
Residual map, bias/noise terms; $\Omega'$ may implement trust region &
Belief over residual/noise; optional drift priors &
Architectural correction becomes either $\Omega'$ restriction or $\LogicFn$ penalty &
Logic-constrained neural networks; residual-corrected neural ODEs (sci-ML) \\

\addlinespace
\textbf{Mixture-of-experts} (with gating or weighting) \cite{peres2001knowledge,peres2008bioprocess} &
Expert submodels plus gate symbols &
Hard selection (Boolean) or soft mixing (graded) &
Gate variables, mixture weights, expert latents; $\Omega'$ can enforce regime constraints &
Belief over gate/weights and experts &
Hard/soft regimes shift expressivity between $\LogicFn$ and $b_{\boldsymbol{\theta},x}$ &
LTN with gating; attention-mixture symbolic learners \\

\addlinespace
\textbf{Rule + physics} (fuzzy rules with constraints) \cite{van2002structured,van2003combining} &
Rule predicates plus mechanistic formulas &
Fuzzy truth + constraint penalties/feasibility &
Memberships, rule weights, constraint slack; $\Omega'$ may encode admissible slack &
Belief over memberships/weights &
Constraints handled by $\Omega'$ restriction and/or $\LogicFn$ penalization &
Logic Tensor Networks; Semantic Loss; KENN \\

\addlinespace
\textbf{Modular structures} (Boolean module graphs) \cite{fiedler2008local,e2022training} &
Wiring constraints and module predicates &
Boolean satisfaction; robust variants via tolerance in $\LogicFn$ &
Module tables/outputs, latent flips; $\Omega'$ restricts to queried symbols &
Belief over module uncertainty (Dirac or distribution) &
Robustness expressed by tolerance in $\LogicFn$ vs dispersion in belief &
Modular Boolean networks \\

\bottomrule
\end{tabular}
\end{sidewaystable}

\section{Evaluation Protocol}
\label{sec:eval}

Let $\mathcal{D}_{\mathrm{test}}=\{(x_i,y_i)\}_{i=1}^n$ be a test set, and let $F_{\boldsymbol{\theta},x}$ be the induced inference functional in Equation~\eqref{eq:nesy-functional}. We evaluate the H2N translation with two kinds of queries, each evaluable through $\LogicFn(\varphi,\omega)$ and integrable under $b_{\boldsymbol{\theta},x}$. A \textbf{constraint query} $\varphi^{\mathrm{con}}_{x}$ expresses structural admissibility; a \textbf{predictive query} $\varphi^{\mathrm{pred}}_{x,y}$ expresses agreement with data (e.g.\ $\|\hat y(x,\omega)-y\|\le\varepsilon$ for a tolerance $\varepsilon$), of which ordinary test accuracy is the point-estimate reading. The protocol is otherwise agnostic to the domain.

For a constraint query $\varphi^{\mathrm{con}}_{x}$ and a tolerance $\tau\ge 0$, the logic--belief separation partitions $\Omega'$ into an admissible region $\Omega_{\mathrm{adm}}^{\tau}(\varphi)$ and a violating region $\Omega_{\mathrm{viol}}^{\tau}(\varphi)$. This partition is fixed entirely by $\LogicFn$ (hence by $(L,\Sem)$ and the chosen feasibility rule), independently of the belief (Proposition~\ref{prop:separation}, Remark~\ref{rem:coverage}). The belief-weighted violation mass at $x$,
\[
V_{\boldsymbol{\theta}}(x;\varphi) := \int_{\Omega'} \Ind{\LogicFn(\varphi,\omega)\le \tau}\, \mathrm{d}B_{\boldsymbol{\theta},x}(\omega),
\qquad \mathrm{d}B_{\boldsymbol{\theta},x}=b_{\boldsymbol{\theta},x}\,\mathrm{d}\Measure,
\]
is the (normalised) belief mass falling in the violating region; it couples the logic-side partition to the belief and reduces to $\int_{\Omega'}\Ind{\LogicFn(\varphi,\omega)=0}\,\mathrm{d}B_{\boldsymbol{\theta},x}$ in the hard Boolean case $\tau=0$.

\subsection{Metrics}
\label{subsec:eval-metrics}

\paragraph{Structural violation rate (SVR).} Given a constraint query $\varphi^{\mathrm{con}}_{x}$,
\begin{equation}
\mathrm{SVR}(\boldsymbol{\theta}) := \frac{1}{n}\sum_{i=1}^n V_{\boldsymbol{\theta}}(x_i;\varphi^{\mathrm{con}}_{x_i}) ,
\end{equation}
the average belief mass placed on interpretations that violate the structural constraints. SVR is near $0$ when the logic side reliably admits interpretations that satisfy the constraints under the learned belief, and increases as belief mass concentrates on invalid interpretations. Three properties characterise SVR. \emph{(i)~It lives on the logic side:} its value is governed by the admissibility partition $\Omega_{\mathrm{adm}}^{\tau}/\Omega_{\mathrm{viol}}^{\tau}$, a property of $(L,\Sem,\LogicFn)$, weighted by the belief. \emph{(ii)~It is tolerance-dependent:} the strictness of the constraint is set by $\tau$ (the violation budget $\kappa$ in the case study of Section~\ref{sec:example}); tightening $\tau$ enlarges the violating region and raises SVR, loosening it lowers SVR.  \emph{(iii)~It is a feasibility measure, not a dispersion measure:} for a Dirac belief, $V_{\boldsymbol{\theta}}(x;\cdot)\in\{0,1\}$ records only whether the point estimate is admissible, and SVR says nothing about how concentrated the belief is.

\paragraph{Belief dispersion (BD).}
To assess epistemic uncertainty, we choose an uncertain projection $\omega_U=\pi_U(\omega)$ (e.g.\ closure values, parameters) and report
\begin{equation}
\mathrm{BD}(\boldsymbol{\theta}) := \frac{1}{n}\sum_{i=1}^n \mathrm{tr}\!\left(\mathrm{Cov}_{\omega\sim b_{\boldsymbol{\theta},x_i}}\![\omega_U]\right),
\end{equation}
where sampling $\omega\sim b_{\boldsymbol{\theta},x}$ is from the normalised belief on $\Omega'$ (proportional to $b_{\boldsymbol{\theta},x}(\omega)\,d\Measure(\omega)$). BD has the dual set of properties. \emph{(i)~It lives on the belief side:} it is a functional of the second moment of $b_{\boldsymbol{\theta},x}$ alone, and the logic function $\LogicFn$ does not enter. \emph{(ii)~It is tolerance-independent:} BD never references $\tau$ or the admissibility partition, so changing the constraint's strictness leaves it unchanged. \emph{(iii)~It is a dispersion measure:} BD is zero for Dirac beliefs and grows as the belief spreads; it is invariant under orthonormal reparameterisations of $\omega_U$ but not scale-invariant, so $\omega_U$ should be standardised when coordinates differ in scale.

\section{Case Study: Structured networks for noisy binary classification}
\label{sec:example}

We instantiate \textsc{H2N} on structured Boolean networks for binary classification with prior feature grouping. The mechanistic part is a partition of the $N$ binary-represented features into $M$ disjoint groups routed to first-layer Boolean modules $F_m:\{0,1\}^{n_m}\to\{0,1\}$, combined by an output module $F_O:\{0,1\}^{M}\to\{0,1\}$. The learned components are the module truth-tables, identified by the learning strategy of NoiseCut~\cite{samadi2024noisecut}.

\paragraph{\textsc{H2N} translation.}
The unknowns are the module truth tables, so we introduce one Boolean atom per table entry: $f_{m,k}$ encodes $F_m(k)$ at local input $k\in\{0,1\}^{n_m}$, and $o_K$ encodes $F_O(K)$ at second-layer \textit{row} $K\in\{0,1\}^M$. 
An interpretation $\omega\in\OmegaSet=\Omega'=\B^{\atoms}$ ($\atoms$ the set of all atoms) is then a complete assignment of module outputs, and induces a predictor: 
routing $x$ through the first layer selects the row $K^{\omega}(x)=\big(\omega(f_{1,k_1(x)}),\dots,\omega(f_{M,k_M(x)})\big)$, with predicted label $O^{\omega}(x)=\omega(o_{K^{\omega}(x)})$. 
Under Boolean semantics $\Sem=\mu_B$, per-sample consistency is $\varphi_s=\big(O^{\omega}(x^{(s)})\!\leftrightarrow\! y^{(s)}\big)$ and the constraint query is $\varphi^{\mathrm{con}}=\bigwedge_{s=1}^{S}\varphi_s$ over the $S$ training samples. 
Writing $\mathrm{viol}(\omega)=\sum_s\Ind{\mu_B(\varphi_s,\omega)=0}$
for the number of training samples misclassified by the interpretation $\omega$, the noise robustness of the logical side is determined by the choice of logical evaluation function:
\[
\LogicFn_{\mathrm{hard}}=\mu_B(\varphi^{\mathrm{con}},\omega),\qquad
\LogicFn_{\mathrm{tol}}=\Ind{\mathrm{viol}(\omega)\le\kappa}.
\]
These correspond respectively to exact consistency ($\kappa=0$) and consistency subject to a violation budget $\kappa$. Consequently, noise robustness is a design choice of the logical evaluation function rather than a property of the learned belief; SVR is evaluated under $\LogicFn_{\mathrm{tol}}$, with the budget $\kappa$ playing the role of the tolerance $\tau$ of Section~\ref{sec:eval}.

\paragraph{Belief.}
NoiseCut identifies the truth tables of the first-layer modules, yielding point estimates $\hat F_m$, which are incorporated as Dirac factors. The truth table of the output module is estimated via label counting: for each row $K$, let $n_0(K)$ and $n_1(K)$ denote the numbers of training samples labeled $0$ and $1$, respectively. This defines a Bernoulli parameter $p_K = n_1(K)/(n_0(K)+n_1(K))$
with $p_K=\tfrac12$ assigned to rows receiving no training support (i.e., $n_0(K)+n_1(K)=0$). 
The belief factorises as
\[
b_{\boldsymbol{\theta}}(\omega)=
\underbrace{\prod_m \Ind{\omega|_{F_m}=\hat F_m}}_{\text{Dirac (first-layer modules)}}\;
\underbrace{\prod_K p_K^{\omega(o_K)}(1-p_K)^{1-\omega(o_K)}}_{\text{Bernoulli (output module)}}.
\]
The non-degenerate Bernoulli factors capture the epistemic uncertainty, while unobserved rows are assigned the maximum-entropy value $p_K=\tfrac12$. Under the counting measure, the functional in~\eqref{eq:nesy-functional} corresponds to a weighted model count. The corresponding estimators are described in Appendix~\ref{app:mc-protocol}.

\paragraph{Experimental setup.}
The data are generated from random functionality-preserving modules with $n_m=4$
inputs per first-layer module. All experiments use $M=4$ first-layer
modules ($N=16$, $2^{16}$ samples) and a $50\%/50\%$ train/test split. The noise sweep study uses a fixed tolerance $\kappa=\lfloor 0.5\,S\rfloor$, and $50$ seeds at each of ten noise levels from $0\%$ to $45\%$. The
tolerance sweep uses $10$ seeds and fixes $10\%$ label noise and sweeps
$\kappa/S$ from $0$ to $0.5$ (Appendix~\ref{app:tolerance-sweep}); the OOD study uses $10$ seeds and 
fixes $\kappa=\lfloor 0.10\,S\rfloor$ and $10\%$ label noise and holds out
$N_{\mathrm{ood}}\in\{1,2,4,6\}$ of the $16$ reachable second-layer rows (Appendix~\ref{app:ood-extra}). Label noise flips training labels only, keeping testing labels clean. All H2N metrics use $20{,}000$ Monte~Carlo $F_O$ draws,
with BD additionally available in closed form (Appendix~\ref{app:mc-protocol}).

\paragraph{Result~1: quantifying uncertainty in the mechanistic part.}
Table~\ref{tab:noise-sweep} and Figure~\ref{fig:noise-sweep} report the H2N
metrics across label-noise levels. As the noise increases, mean test accuracy
falls from $1.00$ to $0.63$ while mean BD measured at the deployment time rises from $0$ to near its maximum
$2^{M}/4=4$, the value attained when every second-layer row is maximally
uncertain, $p_K=\tfrac12$.  
Test accuracy and BD are strongly anti-correlated across seeds (Spearman $\rho=-0.94$), 
and therefore track related but distinct properties: the former is a point estimate
of predictive error, the latter the dispersion of the row-Bernoulli belief.

As shown in Table~\ref{tab:noise-sweep} (as well as in Figure~\ref{fig:noise-appendix}), SVR, under the tolerance $\kappa=\lfloor 0.5\,S\rfloor$, remains near zero while the realized violation count stays within the budget and rises to $0.428$ only at at $45\%$ noise levels, where violations routinely exceed $\kappa$. 

\begin{figure}[h]
\centering
\includegraphics[width=0.99\linewidth]{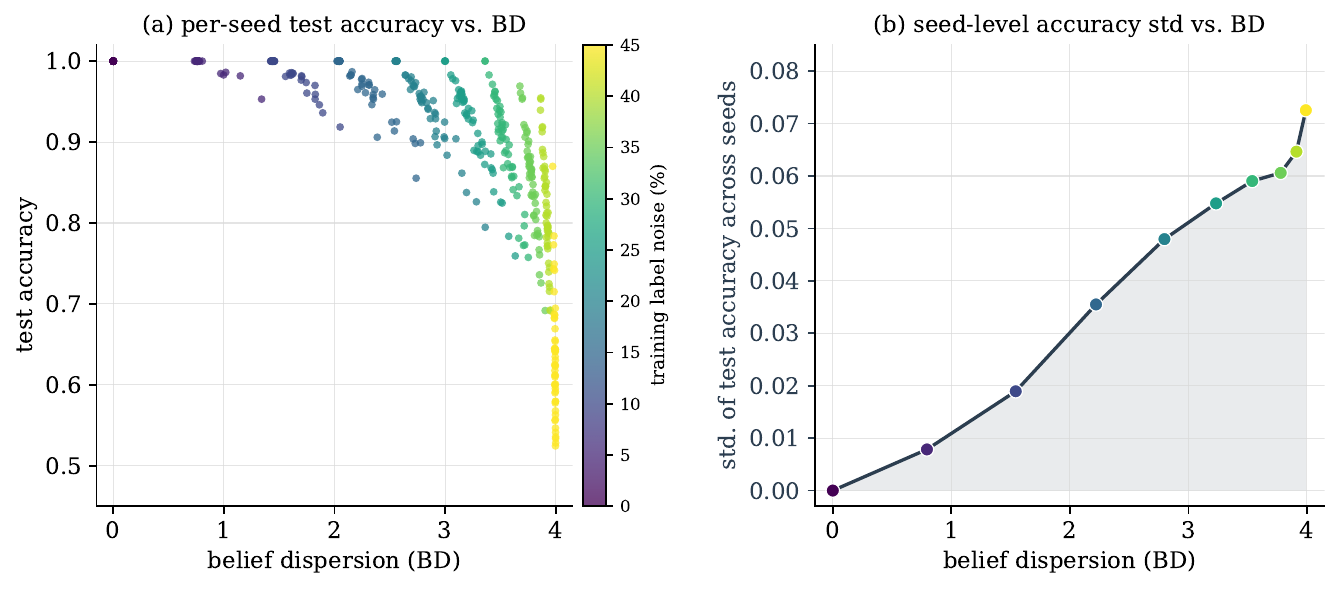}
\caption{Noise sweep, $50$ seeds per level (struct $[4,4,4,4]$, $\kappa=\lfloor 0.5\,S\rfloor$). (a) Per-seed scatter of test accuracy against BD, coloured by noise level: the spread of accuracy widens as BD grows. (b) Standard deviation of test accuracy across seeds, computed within bins of BD; 
the accuracy uncertainty rises monotonically with BD.
}
\label{fig:noise-sweep}
\end{figure}

\begin{table}[h]
\centering
\caption{%
  Noise sweep with H2N metrics; struct $[4,4,4,4]$, train fraction $50\%$,
  $\kappa=\lfloor0.5\,S\rfloor$.
  Each entry reports the mean over $50$ seeds with the
  $95\%$ confidence interval. 
  SVR and BD are measured at the deployment time using training data with label noise. Acc. and F1 score are measured using test data with clean labels.
}
\label{tab:noise-sweep}
\small
\setlength{\tabcolsep}{5pt}
\begin{tabular}{l
                S[table-format=1.3, table-space-text-post={(0.000)}]
                S[table-format=1.3]
                S[table-format=1.3, table-space-text-post={(0.000)}]
                S[table-format=1.3, table-space-text-post={(0.000)}]
                S[table-format=1.3, table-space-text-post={(0.000)}]}
\toprule
  {Noise}
  & {Acc.\ ($\pm$\,95\% CI)}
  & {Acc.\ std}
  & {F1 ($\pm$\,95\% CI)}
  & {SVR ($\pm$\,95\% CI)}
  & {BD ($\pm$\,95\% CI)} \\
\midrule
$0\%$  & 1.000 {(0.000)} & 0.000 & 1.000 {(0.000)} & 0.000 {(0.000)} & 0.000 {(0.000)} \\
$5\%$  & 0.998 {(0.002)} & 0.008 & 0.997 {(0.003)} & 0.000 {(0.000)} & 0.794 {(0.032)} \\
$10\%$ & 0.989 {(0.005)} & 0.019 & 0.989 {(0.005)} & 0.000 {(0.000)} & 1.544 {(0.045)} \\
$15\%$ & 0.973 {(0.010)} & 0.035 & 0.970 {(0.012)} & 0.002 {(0.001)} & 2.221 {(0.062)} \\
$20\%$ & 0.948 {(0.014)} & 0.048 & 0.944 {(0.017)} & 0.011 {(0.002)} & 2.798 {(0.057)} \\
$25\%$ & 0.922 {(0.016)} & 0.055 & 0.916 {(0.019)} & 0.036 {(0.005)} & 3.234 {(0.043)} \\
$30\%$ & 0.898 {(0.017)} & 0.059 & 0.890 {(0.022)} & 0.083 {(0.007)} & 3.539 {(0.026)} \\
$35\%$ & 0.851 {(0.017)} & 0.061 & 0.839 {(0.025)} & 0.170 {(0.008)} & 3.779 {(0.014)} \\
$40\%$ & 0.819 {(0.018)} & 0.065 & 0.803 {(0.028)} & 0.275 {(0.007)} & 3.912 {(0.007)} \\
$45\%$ & 0.633 {(0.021)} & 0.073 & 0.545 {(0.066)} & 0.428 {(0.006)} & 3.991 {(0.002)} \\
\bottomrule
\end{tabular}
\end{table}

The central observation concerns not the means but the \emph{spread} of accuracy.
When the $500$ trained models are stratified by their BD value, the seed-to-seed
standard deviation of test accuracy increases monotonically with BD, from $0$ at
$\mathrm{BD}\approx 0$ to about $0.073$ as $\mathrm{BD}\to 4$
(Figure~\ref{fig:noise-sweep}(b)); the per-seed scatter in
Figure~\ref{fig:noise-sweep}(a) exhibits the same effect. A high BD thus signals a wider, less predictable accuracy distribution. Because BD is a
function only of the training-label counts $n_0(K),n_1(K)$ and the mechanistic structure of 
first-layer partitions, it provides an \textit{a priori} estimate of how uncertain a trained model's accuracy will be, a quantity that test accuracy
can report only after labeled test data have been collected.

\paragraph{Result~2: quantifying uncertainty during extrapolation.}
We hold $N_{\mathrm{ood}}\in\{1,2,4,6\}$ reachable second-layer rows entirely out of
training (with $10\%$ noise) and stratify the test set into
covered and held-out rows, for which predictions amounts to extrapolation. Because the row partition is logical (fixed by
$\Omega'$ rather than by the learned predictor) BD decomposes exactly as
$\mathrm{BD}=\mathrm{BD}_{\text{seen}}+\mathrm{BD}_{\text{unseen}}$, where
$\mathrm{BD}_{\text{unseen}}=\tfrac14\,|\{K:n_0(K)+n_1(K)=0\}|$ sums the
maximum-entropy variance of the unobserved rows and is read off the \emph{learned}
first-layer functions. Crucially, $\mathrm{BD}_{\text{unseen}}$ is computable from
row coverage alone, before any OOD sample is seen
(Remark~\ref{rem:coverage}). Across $10$ seeds, mean in-distribution accuracy
stays at $0.99$--$1.00$ while mean OOD or extrapolation accuracy collapses to $0.35, 0.35, 0.56,
0.49$ and mean $\mathrm{BD}_{\text{unseen}}$ rises monotonically as
$0.25, 0.50, 0.68, 0.95$; the full numbers are in
Appendix~\ref{app:ood-extra}. The decomposition thus reports the model's epistemic
state about coverage from $\Omega'$ directly, whereas accuracy reveals the same
shift only after the OOD labels are observed.

\section{Discussion}
\label{sec:discussion}

Hybrid models are often presented as architectures and losses. \textsc{H2N} reconstructs each design as a NeSy inference object $(L,\Sem,\Omega,\Bel)$ with an explicit functional $F$, making it clear which assumptions act as \emph{structural admissibility} (language/semantics/logic function) versus \emph{learned plausibility} (belief over unknown parameters, functions, or states). This does not remove modeling choices (hard vs.\ soft constraints, choice of $\Sem$, belief parameterisation), but makes them comparable across canonical hybrid design patterns. SVR quantifies how much belief mass violates the structural constraints (logic side), while BD summarises epistemic dispersion of the learned belief (belief side); the two are complementary and decoupled, since the tolerance controls SVR without affecting BD. Robustness under structured shift becomes a targeted intervention: modify $\LogicFn$ (changed regimes/constraints) versus modify $\Bel$.

\paragraph{Measurable consequence of placement.}
Sec.~\ref{sec:example} instantiates the placement argument empirically:
BD quantifies the model's uncertainty on its structure and on uncovered regions of $\Omega'$ at deployment time, while test accuracy reports the same shift only post hoc. The diagnostic follows from logic--belief separation (Prop.~\ref{prop:separation}, Rem.~\ref{rem:coverage}) and is well-defined under H2N but undefined under the architecture-and-loss
description of the same model.

\paragraph{What NeSy gains from hybrid modeling.}
Hybrid modeling contributes assets that NeSy has lacked an interface to: an established library of \emph{compositional patterns} (serial closure, parallel residual, mixture-of-experts, modular networks) with documented identifiability and validity-domain analyses; \emph{principled constraint families} (conservation, balance, monotonicity) that translate naturally into $\LogicFn$ rather than into ad-hoc penalty terms; and \emph{diagnostic practices} from process engineering (validity envelopes, residual checks) that the H2N metrics generalise. The flow is therefore bidirectional: \textsc{H2N} gives hybrid modelers a semantic interface, and gives NeSy researchers a pre-processed catalogue of structured-uncertainty designs to import.

\paragraph{Limitations.}
The translation is not unique: redistributing assumptions between $\Omega'$ and
$\LogicFn$ can alter $F$ unless normalisation is controlled. Principles P1--P5
fix a canonical placement per pattern (Table~\ref{tab:taxonomy}); within this
discipline, alternative placements correspond to alternative modeling
commitments rather than equivalent notations. For large or continuous
$\OmegaSet$, diagnostics rely on approximate inference and inherit its
calibration error.

\bibliographystyle{unsrt}    
\bibliography{nesy2026-sample}

@article{de2025defining,
  title={Defining neurosymbolic ai},
  author={De Smet, Lennert and De Raedt, Luc},
  journal={arXiv preprint arXiv:2507.11127},
  year={2025}
}

@book{cameron2001process,
  title={Process modelling and model analysis},
  author={Cameron, Ian T and Hangos, Katalin},
  volume={4},
  year={2001},
  publisher={Elsevier}
}

@article{fiedler2008local,
  title={Local identification of scalar hybrid models with tree structure},
  author={Fiedler, Bernold and Schuppert, Andreas},
  journal={IMA Journal of Applied Mathematics},
  volume={73},
  number={3},
  pages={449--476},
  year={2008},
  publisher={OUP}
}

@article{thompson1994modeling,
  title={Modeling chemical processes using prior knowledge and neural networks},
  author={Thompson, Michael L and Kramer, Mark A},
  journal={AIChE Journal},
  volume={40},
  number={8},
  pages={1328--1340},
  year={1994},
  publisher={Wiley Online Library}
}

@article{van1996strategy,
  title={Strategy for dynamic process modeling based on neural networks in macroscopic balances},
  author={Van Can, Henricus JL and Hellinga, Chris and Luyben, Karel Ch AM and Heijnen, Joseph J and Te Braake, Hubert AB},
  journal={AIChE Journal},
  volume={42},
  number={12},
  pages={3403--3418},
  year={1996},
  publisher={Wiley Online Library}
}

@incollection{schuppert2000extrapolability,
  title={Extrapolability of structured hybrid models: a key to optimization of complex processes},
  author={Schuppert, Andreas A},
  booktitle={Equadiff 99: (In 2 Volumes)},
  pages={1135--1151},
  year={2000},
  publisher={World Scientific}
}

@article{psichogios1992hybrid,
  title={A hybrid neural network-first principles approach to process modeling},
  author={Psichogios, Dimitris C and Ungar, Lyle H},
  journal={AIChE Journal},
  volume={38},
  number={10},
  pages={1499--1511},
  year={1992},
  publisher={Wiley Online Library}
}

@article{van1998understanding,
  title={Understanding and applying the extrapolation properties of serial gray-box models},
  author={Van Can, Henricus JL and Te Braake, Hubert AB and Dubbelman, Sander and Hellinga, Chris and Luyben, Karel Ch AM and Heijnen, Joseph J},
  journal={AIChE journal},
  volume={44},
  number={5},
  pages={1071--1089},
  year={1998},
  publisher={Wiley Online Library}
}

@article{von2014hybrid,
  title={Hybrid semi-parametric modeling in process systems engineering: Past, present and future},
  author={Von Stosch, Moritz and Oliveira, Rui and Peres, Joana and De Azevedo, Sebasti{\~a}o Feyo},
  journal={Computers \& Chemical Engineering},
  volume={60},
  pages={86--101},
  year={2014},
  publisher={Elsevier}
}

@article{kahrs2007validity,
  title={The validity domain of hybrid models and its application in process optimization},
  author={Kahrs, Olaf and Marquardt, Wolfgang},
  journal={Chemical Engineering and Processing: Process Intensification},
  volume={46},
  number={11},
  pages={1054--1066},
  year={2007},
  publisher={Elsevier}
}

@article{peres2008bioprocess,
  title={Bioprocess hybrid parametric/nonparametric modelling based on the concept of mixture of experts},
  author={Peres, J and Oliveira, R and de Azevedo, S Feyo},
  journal={Biochemical Engineering Journal},
  volume={39},
  number={1},
  pages={190--206},
  year={2008},
  publisher={Elsevier}
}

@article{peres2001knowledge,
  title={Knowledge based modular networks for process modelling and control},
  author={Peres, J and Oliveira, R and De Azevedo, S Feyo},
  journal={Computers \& Chemical Engineering},
  volume={25},
  number={4-6},
  pages={783--791},
  year={2001},
  publisher={Elsevier}
}

@article{bhutani2006first,
  title={First-principles, data-based, and hybrid modeling and optimization of an industrial hydrocracking unit},
  author={Bhutani, N and Rangaiah, GP and Ray, AK},
  journal={Industrial \& engineering chemistry research},
  volume={45},
  number={23},
  pages={7807--7816},
  year={2006},
  publisher={ACS Publications}
}

@article{lee2002hybrid,
  title={Hybrid neural network modeling of a full-scale industrial wastewater treatment process},
  author={Lee, Dae Sung and Jeon, Che Ok and Park, Jong Moon and Chang, Kun Soo},
  journal={Biotechnology and bioengineering},
  volume={78},
  number={6},
  pages={670--682},
  year={2002},
  publisher={Wiley Online Library}
}

@article{takagi1985fuzzy,
  title={Fuzzy identification of systems and its applications to modeling and control},
  author={Takagi, Tomohiro and Sugeno, Michio},
  journal={IEEE transactions on systems, man, and cybernetics},
  number={1},
  pages={116--132},
  year={1985},
  publisher={IEEE}
}

@article{van2002structured,
  title={A structured modeling approach for dynamic hybrid fuzzy-first principles models},
  author={van Lith, Pascal F and Betlem, Ben HL and Roffel, Brian},
  journal={Journal of Process Control},
  volume={12},
  number={5},
  pages={605--615},
  year={2002},
  publisher={Elsevier}
}

@article{van2003combining,
  title={Combining prior knowledge with data driven modeling of a batch distillation column including start-up},
  author={van Lith, Pascal F and Betlem, Ben HL and Roffel, Brian},
  journal={Computers \& chemical engineering},
  volume={27},
  number={7},
  pages={1021--1030},
  year={2003},
  publisher={Elsevier}
}

@article{schuppert2011efficient,
  title={Efficient reengineering of meso-scale topologies for functional networks in biomedical applications},
  author={Schuppert, Andreas A},
  journal={Journal of Mathematics in Industry},
  volume={1},
  number={1},
  pages={6},
  year={2011},
  publisher={Springer}
}

@article{e2022training,
  title={A training strategy for hybrid models to break the curse of dimensionality},
  author={Samadi, Moein E and Kiefer, Sandra and Fritsch, Sebastian Johaness and Bickenbach, Johannes and Schuppert, Andreas},
  journal={Plos one},
  volume={17},
  number={9},
  pages={e0274569},
  year={2022},
  publisher={Public Library of Science San Francisco, CA USA}
}

@article{samadi2024hybrid,
  title={A hybrid modeling framework for generalizable and interpretable predictions of ICU mortality across multiple hospitals},
  author={Samadi, Moein E and Guzman-Maldonado, Jorge and Nikulina, Kateryna and Mirzaieazar, Hedieh and Sharafutdinov, Konstantin and Fritsch, Sebastian Johannes and Schuppert, Andreas},
  journal={Scientific reports},
  volume={14},
  number={1},
  pages={5725},
  year={2024},
  publisher={Nature Publishing Group UK London}
}

@article{samadi2024noisecut,
  title={Noisecut: a python package for noise-tolerant classification of binary data using prior knowledge integration and max-cut solutions},
  author={Samadi, Moein E and Mirzaieazar, Hedieh and Mitsos, Alexander and Schuppert, Andreas},
  journal={BMC bioinformatics},
  volume={25},
  number={1},
  pages={155},
  year={2024},
  publisher={Springer}
}

@article{agarwal1997combining,
  title={Combining neural and conventional paradigms for modelling, prediction and control},
  author={Agarwal, Mukul},
  journal={International Journal of Systems Science},
  volume={28},
  number={1},
  pages={65--81},
  year={1997},
  publisher={Taylor \& Francis}
}

@book{glassey2018hybrid,
  title={Hybrid modeling in process industries},
  author={Glassey, Jarka and Von Stosch, Moritz},
  year={2018},
  publisher={CRC Press}
}

@article{tulleken1993grey,
  title={Grey-box modelling and identification using physical knowledge and Bayesian techniques},
  author={Tulleken, Herbert JAF},
  journal={Automatica},
  volume={29},
  number={2},
  pages={285--308},
  year={1993},
  publisher={Elsevier}
}

@article{teixeira2007hybrid,
  title={Hybrid semi-parametric mathematical systems: Bridging the gap between systems biology and process engineering},
  author={Teixeira, Ana P and Carinhas, Nuno and Dias, Jo{\~a}o ML and Cruz, Pedro and Alves, Paula M and Carrondo, Manuel JT and Oliveira, Rui},
  journal={Journal of biotechnology},
  volume={132},
  number={4},
  pages={418--425},
  year={2007},
  publisher={Elsevier}
}

@article{bader2005dimensions,
  title={Dimensions of neural-symbolic integration-a structured survey},
  author={Bader, Sebastian and Hitzler, Pascal},
  journal={arXiv preprint cs/0511042},
  year={2005}
}

@article{hitzler2022neural,
  title={Neural-symbolic learning and reasoning: A survey and interpretation},
  author={Hitzler, P and Sarker, MK and Besold, TR and Garcez, AD and Bader, S and Bowman, H and Domingos, P and Hitzler, P and K{\"u}hnberger, KU and Lamb, LC and others},
  journal={Frontiers in artificial intelligence and applications},
  volume={342},
  pages={1--51},
  year={2022}
}

@article{garcez2019neural,
  title={Neural-symbolic computing: An effective methodology for principled integration of machine learning and reasoning},
  author={Garcez, Artur d'Avila and Gori, Marco and Lamb, Luis C and Serafini, Luciano and Spranger, Michael and Tran, Son N},
  journal={arXiv preprint arXiv:1905.06088},
  year={2019}
}

@article{garcez2023neurosymbolic,
  title={Neurosymbolic ai: The 3 rd wave},
  author={Garcez, Artur d’Avila and Lamb, Luis C},
  journal={Artificial Intelligence Review},
  volume={56},
  number={11},
  pages={12387--12406},
  year={2023},
  publisher={Springer}
}

@inproceedings{belle2015probabilistic,
  title={Probabilistic inference in hybrid domains by weighted model integration},
  author={Belle, Vaishak and Passerini, Andrea and Van den Broeck, Guy},
  booktitle={Proceedings of the Twenty-Fourth International Joint Conference on Artificial Intelligence, IJCAI 2015, Buenos Aires, Argentina, July 25-31, 2015},
  pages={2770--2776},
  year={2015},
  organization={IJCAI Inc}
}

@article{van2021modular,
  title={Modular design patterns for hybrid learning and reasoning systems: a taxonomy, patterns and use cases},
  author={van Bekkum, Michael and de Boer, Maaike and van Harmelen, Frank and Meyer-Vitali, Andr{\'e} and Teije, Annette ten},
  journal={Applied Intelligence},
  volume={51},
  number={9},
  pages={6528--6546},
  year={2021},
  publisher={Springer}
}

@article{marconato2023not,
  title={Not all neuro-symbolic concepts are created equal: Analysis and mitigation of reasoning shortcuts},
  author={Marconato, Emanuele and Teso, Stefano and Vergari, Antonio and Passerini, Andrea},
  journal={Advances in Neural Information Processing Systems},
  volume={36},
  pages={72507--72539},
  year={2023}
}

@article{marra2024statistical,
  title={From statistical relational to neurosymbolic artificial intelligence: A survey},
  author={Marra, Giuseppe and Duman{\v{c}}i{\'c}, Sebastijan and Manhaeve, Robin and De Raedt, Luc},
  journal={Artificial Intelligence},
  volume={328},
  pages={104062},
  year={2024},
  publisher={Elsevier}
}

\appendix

\counterwithin{figure}{section}
\counterwithin{table}{section}

\renewcommand{\thefigure}{\Alph{section}\arabic{figure}}
\renewcommand{\thetable}{\Alph{section}\arabic{table}}


\section{Measurability assumptions for the inference functional}
\label{app:measure}
The functional~\eqref{eq:nesy-functional} is well-defined under the following standard conditions. We assume $(\OmegaSet,\Sigma_{\OmegaSet})$ is measurable with $\Omega'\in\Sigma_{\OmegaSet}$; that $\omega\mapsto\LogicFn(\varphi,\omega)$ is measurable for each query $\varphi$; that the belief induces a finite measure $B_{\boldsymbol{\theta},x}$ on $\Omega'$; and that $\LogicFn(\varphi,\cdot)\in L^1(\Omega',B_{\boldsymbol{\theta},x})$. Dirac beliefs correspond to $B_{\boldsymbol{\theta},x}=\delta_{\omega^\ast(x)}$, recovering pointwise evaluation $F_{\boldsymbol{\theta},x}(\varphi)=\LogicFn(\varphi,\omega^\ast(x))$.
%

\section{Proof of Proposition~\ref{prop:separation}}
\label{app:proof}
\begin{proof}
For (1), the partition $\Omega_{\mathrm{viol}}^{\tau}(\varphi)$ versus
$\Omega_{\mathrm{adm}}^{\tau}(\varphi)$ is defined directly from
$\LogicFn(\varphi,\cdot)$ and $\tau$, hence depends only on $(L,\Sem,\LogicFn)$ and
not on $B_{\boldsymbol{\theta},x}$. For the integral identity in the hard case
$\tau=0$, note that under the hypothesis $\LogicFn(\varphi,\cdot)\ge 0$ the set
$\Omega_{\mathrm{viol}}^{0}(\varphi)=\{\omega\in\Omega':\LogicFn(\varphi,\omega)\le 0\}$
coincides with $\{\omega\in\Omega':\LogicFn(\varphi,\omega)=0\}$. Hence the integrand
vanishes on $\Omega_{\mathrm{viol}}^{0}(\varphi)$, that subset contributes zero, and
\[
F_{\boldsymbol{\theta},x}(\varphi)
=\int_{\Omega'} \LogicFn(\varphi,\omega)\,\mathrm{d}B_{\boldsymbol{\theta},x}(\omega)
=\int_{\Omega_{\mathrm{adm}}^{0}(\varphi)} \LogicFn(\varphi,\omega)\,\mathrm{d}B_{\boldsymbol{\theta},x}(\omega).
\]
For (2), since $\LogicFn(\varphi,\cdot)$ is measurable, substituting
$B_{\boldsymbol{\theta},x}=\delta_{\omega^\ast(x)}$ into
$F_{\boldsymbol{\theta},x}(\varphi)=\int_{\Omega'}\LogicFn(\varphi,\omega)\,\mathrm{d}B_{\boldsymbol{\theta},x}(\omega)$
collapses the integral to $\LogicFn(\varphi,\omega^\ast(x))$. By the definition of
$\Omega_{\mathrm{adm}}^{\tau}(\varphi)$, this value exceeds $\tau$ iff
$\omega^\ast(x)\in\Omega_{\mathrm{adm}}^{\tau}(\varphi)$, which is the stated
admissibility criterion. For non-Dirac beliefs, $F_{\boldsymbol{\theta},x}(\varphi)$
is the $B_{\boldsymbol{\theta},x}$-weighted average of $\LogicFn(\varphi,\cdot)$, so
(in the hard case) its value depends only on how $B_{\boldsymbol{\theta},x}$ allocates
mass within $\Omega_{\mathrm{adm}}^{0}(\varphi)$.
\end{proof}
%

\section{H2N principles (full statements with algorithm)}
\label{app:principles}

\noindent\textbf{P1 (Structure as language).}
Mechanistic equations and modular composition graphs determine the \emph{symbols} and well-formed expressions of $L$.
Unknown closures and latent states are introduced as explicit symbols in $L$
so that they can be quantified over by interpretations.

\noindent\textbf{P2 (Semantics as evaluation).}
The semantics $\Sem$ specifies how formulas are evaluated in an interpretation (Boolean truth, fuzzy degree, etc.).
The logic function $\LogicFn$ is the \emph{operational} object used inside the integral:
it implements admissibility and scoring by selecting or reweighting semantic values,
e.g.\ hard satisfaction indicators or residual-based penalties.

\noindent\textbf{P3 (Learning as belief conditioned on input).}
Learned modules induce a belief component $\Bel$ by producing a non-negative density/weight
$b_{\boldsymbol{\theta},x}(\omega)$ over interpretations $\omega\in\Omega'$ given input $x$.
Deterministic learners correspond to Dirac beliefs concentrated at $\omega^\ast(x)$;
Bayesian/ensemble/energy-based learners induce non-degenerate beliefs.

\noindent\textbf{P4 (Constraints and structures live on the logic side).}
Validity domains, trust regions, hard constraints (e.g.\ conservation, positivity), and regime declarations are encoded as
\emph{structural admissibility} rather than plausibility:
(i) by restricting the integration domain $\Omega'$ to an admissible subset of $\OmegaSet$, and/or
(ii) by enforcing selection/penalties inside $\LogicFn$.
This separates \emph{what is admissible} (logic/structure) from \emph{what is plausible} (belief/learning).

\noindent\textbf{P5 (Factorization mirrors architectural composition).}
When the hybrid design factorizes (serial, parallel, mixture), the NeSy construction should make this explicit by
factorizing $\LogicFn$ and/or $b_{\boldsymbol{\theta},x}$ over corresponding sub-interpretations and sub-formulas.
This yields comparable semantics across patterns and clarifies where uncertainty and structures enter.

Hence, comparing hybrid modeling designs reduces to comparing how they change (i) the language/semantics, (ii) admissibility/scoring via $\LogicFn$, (iii) the unknowns integrated over in $\Omega'$, and (iv) the belief factorisation and dispersion.

\begin{algorithm2e}[t]
\footnotesize
\caption{\textsc{H2N} translation}
\label{alg:h2n}
\KwIn{Hybrid modeling design with mechanistic part $M_\alpha$, learned part(s) $g_\beta$, composition $\operatorname{Comp}$, and structures $\mathcal{C}$}
\KwOut{NeSy tuple $(L,\Sem,\OmegaSet,\Bel)$, logic function $\LogicFn$, and induced functional $F_{\boldsymbol{\theta},x}$}

\textbf{Build the language $L$.}
Encode mechanistic equations, wiring/composition predicates, and rule symbols.
Introduce explicit symbols for unknown closures/residuals/gates/slacks and latent states.

\textbf{Define the interpretation space $\OmegaSet$.}
Let $\omega\in\OmegaSet$ assign values to the unknown entities introduced in $L$
(e.g.\ $\alpha,\psi,z$, gating variables, mixture weights, memberships, noise, slack variables).

\textbf{Choose semantics $\Sem$ and define the logic function $\LogicFn$.}
Fix an evaluation scheme (Boolean, fuzzy, residual-/trajectory-based) and define $\LogicFn(\varphi,\omega)$ accordingly:
\emph{hard} admissibility (indicator selectors), \emph{tolerant} admissibility (thresholded residuals), or
\emph{graded} scoring (penalties/likelihood-like scores).

\textbf{Encode structures $\mathcal{C}$ on the logic side.}
Implement hard feasibility by restricting the measurable integration domain to $\Omega'\subseteq\OmegaSet$,
and/or implement soft feasibility by penalties/selection inside $\LogicFn$. 
(These choices affect $F$ unless one renormalizes/conditions the belief.)

\textbf{Define belief $\Bel$ (conditional on inputs $x$).}
Specify a non-negative density function $b_{\boldsymbol{\theta},x}(\omega)$ w.r.t.\ $\Measure$
induced by the learned components (posterior, likelihood-weighted prior, ensemble mixture).
Dirac beliefs arise as degenerate cases; weights can be normalized when probabilities are required.

\textbf{Assemble inference.}
Instantiate $F_{\boldsymbol{\theta},x}(\varphi)$ by Equation~\eqref{eq:nesy-functional} over $\Omega'$.
If the pattern factorizes (serial, parallel, mixture), make the factorization of $\LogicFn$ and/or $b_{\boldsymbol{\theta},x}$ explicit.
\end{algorithm2e}
%

\section{Monte~Carlo procedure for H2N metrics in the case study}
\label{app:mc-protocol}

With $\OmegaSet=\B^\atoms$ and counting measure, the inference functional reduces to a sum over $\omega$:
$F_{\boldsymbol{\theta},x}(\varphi)=\sum_{\omega}\LogicFn(\varphi,\omega)b_{\boldsymbol{\theta},x}(\omega)$.
NoiseCut yields Dirac belief on the first-layer functions $\hat F_m$ and an independent Bernoulli belief on each second-layer atom $o_K$ with parameter
$p_K = n_1(K)/(n_0(K)+n_1(K))$ (and $p_K=\tfrac12$ when $n_0(K)+n_1(K)=0$).
Under this factorisation:

\paragraph{BD.}
Closed form $\mathrm{BD}=\sum_K p_K(1-p_K)$ because the first-layer is Dirac (zero variance) and the $2^M$ second-layer atoms are independent.

\paragraph{SVR.}
We draw $T=20{,}000$ Monte~Carlo samples $F_O^{(t)}\in\{0,1\}^{2^M}$ by independent Bernoulli sampling per row. With Dirac first-layer, the number of violated training samples decomposes per row:
\[
\mathrm{viol}(\omega^{(t)})=\sum_K \bigl[n_1(K)\Ind{F_O^{(t)}(K)=0}+n_0(K)\Ind{F_O^{(t)}(K)=1}\bigr].
\]
The tolerance enters as the violation budget $\kappa$ inside $\LogicFn_{\mathrm{tol}}=\Ind{\mathrm{viol}\le\kappa}$, so an interpretation is admissible when $\mathrm{viol}\le\kappa$ and violating otherwise. Hence $\mathrm{SVR}=1-\tfrac{1}{T}\sum_t \Ind{\mathrm{viol}(\omega^{(t)})\le\kappa}$ is the Monte~Carlo estimate of the belief mass on violating interpretations. $T=20{,}000$ yields standard errors below $0.005$; we verified stability by replicate runs with different MC seeds. (BD requires no sampling: it is the closed form above.)

\section{Tolerance sweep}
\label{app:tolerance-sweep}

This appendix isolates the role of the tolerance $\tau$ (instantiated here and in the case study of Section~\ref{sec:example} by
the violation budget $\kappa$) as a purely \emph{logic-side} design choice, and
turns the two contrasting tolerance properties of the metrics established in
Section~\ref{sec:eval} into a measurement. SVR is tolerance-dependent (property
(ii) of the SVR): it reads the admissibility partition
$\Omega_{\mathrm{adm}}^{\kappa}/\Omega_{\mathrm{viol}}^{\kappa}$, which $\kappa$
resizes. BD is tolerance-independent (property (ii) of the BD): as a second moment
of the belief $b_{\boldsymbol{\theta}}$ it never references $\kappa$ or the
admissibility partition. Sweeping $\kappa$ at a fixed belief should therefore move
SVR across its full range $[0,1]$ while leaving BD (and the deployed predictor's
accuracy) exactly fixed.

We fix one learned model per seed (struct $[4,4,4,4]$, $10\%$ label noise, $50\%$
training fraction, $10$ seeds) and re-evaluate the H2N metrics under eleven budgets,
with $\kappa/S$ ranging from $0$ to $0.5$. The fit is performed once: neither the
first-layer Dirac factors $\hat F_m$ nor the row-Bernoulli parameters $p_K$ are
refitted between budgets. Only the logic function
$\LogicFn_{\mathrm{tol}}=\Ind{\mathrm{viol}(\omega)\le\kappa}$ changes, so any
variation across the columns of Table~\ref{tab:tol-sweep} is attributable to the
logic side alone.

Two columns are flat by construction. Accuracy is constant at $0.998$ because the
deployed predictor is the per-row threshold of the Bernoulli belief, which does not
reference $\kappa$. BD is constant at $0.783$ because it is a functional of
$b_{\boldsymbol{\theta}}$ alone; its within-seed spread across the eleven budgets is
exactly $0$, so the column mean and every per-seed value coincide. The two
logic-side columns instead move monotonically and are mirror images, since
$\Pr[\mathrm{viol}\le\kappa]=1-\mathrm{SVR}$. At budgets below the noise floor
($\kappa/S\in\{0,0.05\}$) essentially every sampled interpretation exceeds the
budget (the cleanest belief consistent with the mechanistic structure still
misclassifies the $\sim\!10\%$ of flipped training labels) so SVR saturates at
$1.000$ and the admissible mass is $0$. As the budget crosses the noise level near
$\kappa/S=0.10$, SVR collapses ($0.403$ at $\kappa/S=0.10$, $0.160$ at $0.15$,
$0.055$ at $0.20$) and reaches $0$ by $\kappa/S=0.40$, where the budget comfortably
absorbs the label noise and every sampled interpretation is admissible.

This makes the SVR/BD distinction of Section~\ref{sec:eval} concrete and complements
Appendix~\ref{app:SVR-and-BD}: there, fixing $\kappa$ and varying the noise moves BD
while leaving SVR near zero; here, fixing the noise and varying $\kappa$ moves SVR
across its entire range while leaving BD untouched. 

\begin{table}[h]
\centering
\caption{Tolerance dependence of the H2N metrics (struct $[4,4,4,4]$, $10\%$ noise, $50\%$ train). Mean over $10$ seeds. Accuracy and BD are independent of $\kappa$; SVR and the admissible mass $\Pr[\mathrm{viol}\le\kappa]=1-\mathrm{SVR}$ vary monotonically.}
\label{tab:tol-sweep}
\begin{tabular}{lcccc}
\toprule
$\kappa/S$ & Accuracy & SVR & BD & $\Pr[\mathrm{viol}\le\kappa]$ \\
\midrule
$0.00$ & 0.998 & 1.000 & 0.783 & 0.000 \\
$0.05$ & 0.998 & 1.000 & 0.783 & 0.000 \\
$0.10$ & 0.998 & 0.403 & 0.783 & 0.597 \\
$0.15$ & 0.998 & 0.160 & 0.783 & 0.840 \\
$0.20$ & 0.998 & 0.055 & 0.783 & 0.945 \\
$0.25$ & 0.998 & 0.016 & 0.783 & 0.984 \\
$0.30$ & 0.998 & 0.004 & 0.783 & 0.996 \\
$0.35$ & 0.998 & 0.001 & 0.783 & 0.999 \\
$0.40$ & 0.998 & 0.000 & 0.783 & 1.000 \\
$0.45$ & 0.998 & 0.000 & 0.783 & 1.000 \\
$0.50$ & 0.998 & 0.000 & 0.783 & 1.000 \\
\bottomrule
\end{tabular}
\end{table}
%

\section{OOD experiment: full numbers}
\label{app:ood-extra}

This appendix gives the full numbers behind Result~2 of
Section~\ref{sec:example} and discusses the structural out-of-distribution (OOD)
protocol. The mechanistic partition routes each input through $M=4$ first-layer
modules whose joint output indexes one of $2^M=16$ second-layer rows; for the
randomly drawn ground-truth modules all $16$ rows are reachable. We hold
$N_{\mathrm{ood}}\in\{1,2,4,6\}$ of these rows out of training entirely (with
$10\%$ label noise and budget $\kappa=\lfloor 0.10\,S\rfloor$), so that any test
input landing on a held-out row forces a prediction on a region of $\Omega'$ that
received no training support: structural extrapolation rather than
interpolation. Because the row partition is \emph{logical} (fixed by $\Omega'$
rather than by the learned predictor) belief dispersion decomposes exactly as
$\mathrm{BD}=\mathrm{BD}_{\text{seen}}+\mathrm{BD}_{\text{unseen}}$, with
$\mathrm{BD}_{\text{unseen}}=\tfrac14\,|\{K:n_0(K)+n_1(K)=0\}|$ collecting the
maximum-entropy variance ($p_K=\tfrac12$) of the uncovered rows. By
Remark~\ref{rem:coverage} this term is read off the \emph{learned} first-layer
factorisation alone, before any OOD label is observed, which is what makes it a
deployment-time quantity rather than a post-hoc one.

\begin{figure}[h]
\centering
\includegraphics[width=0.6\linewidth]{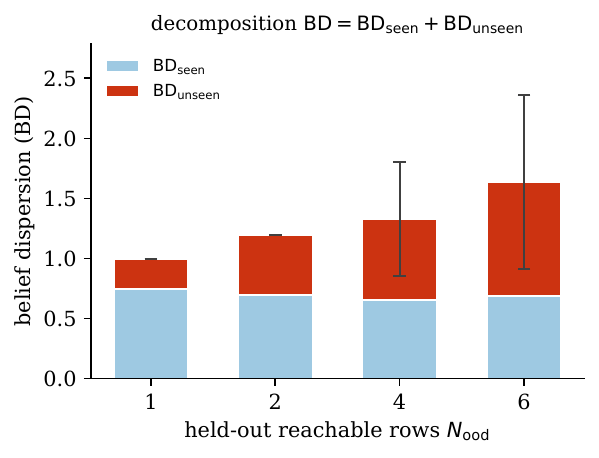}
\caption{Structural OOD shift (struct $[4,4,4,4]$, $10\%$ noise, $10$ seeds). The additive H2N decomposition $\mathrm{BD}=\mathrm{BD}_{\text{seen}}+\mathrm{BD}_{\text{unseen}}$: $\mathrm{BD}_{\text{unseen}}$, computed from row coverage alone, grows monotonically with $N_{\mathrm{ood}}$ and identifies the belief mass on the held-out region without observing OOD test data (whiskers: $\pm 1$ std of $\mathrm{BD}_{\text{unseen}}$).}
\label{fig:ood-appendix}
\end{figure}

\begin{table}[h]
\centering
\caption{Structural OOD shift on second-layer rows (struct $[4,4,4,4]$, $10\%$ label noise, $\kappa=\lfloor 0.10\,S\rfloor$). Mean (std) over $10$ seeds. All $2^M=16$ second-layer rows are reachable for the randomly drawn ground-truth modules; the experiment holds out $N_{\mathrm{ood}}$ of them entirely from training. ``unseen frac.'' is the mean fraction of rows left uncovered under the \emph{learned} factorisation.}
\label{tab:ood-summary}
\small
\setlength{\tabcolsep}{4pt}
\begin{tabular}{lcccccc}
\toprule
$N_{\mathrm{ood}}$ & Acc.\ in-dist & Acc.\ OOD & BD & $\mathrm{BD}_{\text{seen}}$ & $\mathrm{BD}_{\text{unseen}}$ & unseen frac. \\
\midrule
1 & 0.998\,(0.007) & 0.347\,(0.473) & 0.997\,(0.095) & 0.747\,(0.095) & 0.250\,(0.000) & 0.062 \\
2 & 0.998\,(0.007) & 0.345\,(0.282) & 1.199\,(0.100) & 0.699\,(0.100) & 0.500\,(0.000) & 0.125 \\
4 & 0.998\,(0.005) & 0.556\,(0.263) & 1.330\,(0.370) & 0.655\,(0.116) & 0.675\,(0.472) & 0.169 \\
6 & 0.990\,(0.024) & 0.494\,(0.308) & 1.635\,(0.484) & 0.685\,(0.350) & 0.950\,(0.725) & 0.237 \\
\bottomrule
\end{tabular}
\end{table}

As shown in Figure~\ref{fig:ood-appendix} and Table~\ref{tab:ood-summary}, mean $\mathrm{BD}_{\text{unseen}}$ rises monotonically with the hold-out size ($0.25, 0.50, 0.68, 0.95$), and at the smaller hold-outs ($N_{\mathrm{ood}}=1,2$) it is \emph{deterministic} across seeds (std $0$): the learned first-layer partition reliably keeps every held-out row as a distinct uncovered row, so $\mathrm{BD}_{\text{unseen}}=\tfrac14 N_{\mathrm{ood}}$ exactly. At the larger hold-outs ($N_{\mathrm{ood}}=4,6$) a seed-to-seed variance appears (std $0.47$ and $0.73$): in $3$ of the $10$ seeds at each level the learner recovers a \emph{label-equivalent} factorisation that maps held-out ground-truth rows into already-observed learned equivalence classes, giving $\mathrm{BD}_{\text{unseen}}=0$. This is exactly what the decomposition is meant to report: $\mathrm{BD}_{\text{unseen}}$ measures the belief mass on rows uncovered \emph{under the actually-learned factorisation}, the only one accessible at deployment time, not under a hypothetical ground-truth one.

The OOD-accuracy variance has a separate origin. A held-out row receives the maximum-entropy belief $p_K=\tfrac12$, whose thresholded prediction is either right or wrong for that entire row; when few rows are held out (especially $N_{\mathrm{ood}}=1$) the per-seed OOD accuracy is therefore near-bimodal (std $0.47$) and concentrates as more rows are averaged. Despite this variance the qualitative pattern is robust: in-distribution accuracy is essentially unchanged across $N_{\mathrm{ood}}$ while OOD accuracy is far lower, and $\mathrm{BD}_{\text{unseen}}$ (computed before any OOD label is seen) grows with the hold-out. Finally, $\mathrm{BD}_{\text{unseen}}$ is informative about \emph{severity}, not only presence: pooling all seeds, those with $\mathrm{BD}_{\text{unseen}}>0$ average $0.39$ OOD accuracy against $0.69$ for the seeds where the learner absorbed the held-out rows ($\mathrm{BD}_{\text{unseen}}=0$). A zero $\mathrm{BD}_{\text{unseen}}$ thus does not certify OOD robustness (those models still fall well below their in-distribution accuracy) but signals that the model \emph{believes} it has coverage, which is the deployment-time statement the decomposition is designed to make.
%

\section{SVR and BD are complementary}
\label{app:SVR-and-BD}

SVR and BD probe opposite sides of the logic--belief separation
(Proposition~\ref{prop:separation}) and are, by construction, free to vary
independently. SVR is a functional of the admissibility partition
$\Omega_{\mathrm{adm}}^{\tau}/\Omega_{\mathrm{viol}}^{\tau}$, which is a property of $(L,\Sem,\LogicFn)$, weighted by the belief, and it is governed by the tolerance
$\tau$. BD is the trace of the belief covariance and depends on $b_{\boldsymbol{\theta},x}$
alone, never referencing $\LogicFn$ or $\tau$. The two therefore read out distinct
degrees of freedom: moving the tolerance at a fixed belief slides SVR while leaving
BD unchanged (Appendix~\ref{app:tolerance-sweep}), and concentrating or dispersing
the belief at a fixed admissibility moves BD while leaving SVR fixed.

All four combinations are realisable, so neither metric can be recovered from the
other. A model may disperse its belief widely (large BD) yet rarely violate its
constraints (small SVR) when the admissible region is large or the tolerance is
loose; conversely, a model with a sharp belief (small BD) can still place that
belief largely outside the admissible region (large SVR). SVR thus answers whether
the learned belief respects the mechanistic structure, while BD answers how
concentrated the learned plausibility is; neither subsumes the other, and reporting
both separates structural violations from epistemic uncertainty rather than
collapsing them into a single accuracy figure.

The case study (Section~\ref{sec:example}) exhibits this decoupling directly. Under
a fixed tolerance $\kappa=\lfloor 0.5\,S\rfloor$, Table~\ref{tab:noise-sweep} and
Figure~\ref{fig:noise-appendix} show SVR pinned at or near zero across a wide noise
range ($\le 0.011$ through $20\%$ noise, and still only $0.083$ at $30\%$): the
generous budget keeps almost all sampled interpretations admissible. BD, by
contrast, climbs monotonically over the same range, from $0$ at $0\%$ noise to
$3.539$ at $30\%$ and toward its ceiling $2^{M}/4=4$ as the row beliefs approach
maximum entropy. SVR departs from zero only once the noise pushes the realized
violation count past the fixed budget, rising to $0.460$ at $50\%$, by which point
BD has already saturated. The two metrics therefore move on different schedules: BD
tracks the growth of epistemic uncertainty in the learned rows from the very first
noise increment, whereas SVR is a logic-side feasibility statement that fires only
when the chosen tolerance is exceeded.

\begin{figure}[h]
\centering
\includegraphics[width=0.6\linewidth]{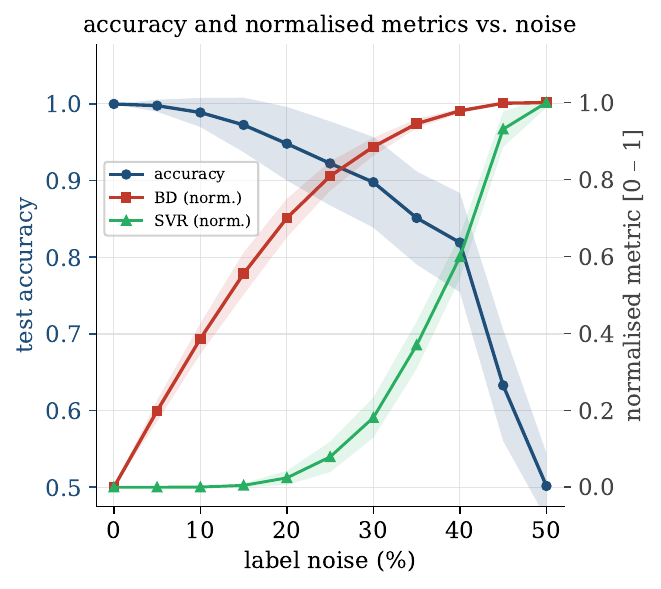}
\caption{Noise sweep, $50$ seeds per level (struct $[4,4,4,4]$, $\kappa=\lfloor 0.5\,S\rfloor$). Mean test accuracy, mean belief dispersion BD, and mean SVR (shaded band: $\pm 1$ standard deviation across seeds) as functions of the label-noise level. The y-axis for the two metrics has been normalized for better visualizations.}
\label{fig:noise-appendix}
\end{figure}

\end{document}